\def\Figref#1{Figure~\ref{#1}}
\def\Secref#1{Section~\ref{#1}}
\def\eqref#1{equation~\ref{#1}}
\def\Eqref#1{Equation~\ref{#1}}
\def\1{\bm{1}}
\DeclareMathAlphabet{\mathsfit}{\encodingdefault}{\sfdefault}{m}{sl}
\SetMathAlphabet{\mathsfit}{bold}{\encodingdefault}{\sfdefault}{bx}{n}
\newtheorem{theorem}{Theorem}
\newtheorem{lemma}{Lemma}
\newcommand{\cmark}{\ding{51}}%
\newcommand{\xmark}{\ding{55}}%
\newcommand{\printfnsymbol}[1]{\textsuperscript{\@fnsymbol{#1}}}
\def\blfootnote{\gdef\@thefnmark{}\@footnotetext}
\begin{document}

\title{Scale Equivariance Improves Siamese Tracking}

\author{Ivan Sosnovik\thanks{equal contribution} 
\quad Artem Moskalev \printfnsymbol{1}
\quad Arnold Smeulders \\
UvA-Bosch Delta Lab\\
University of Amsterdam, Netherlands\\
{\tt\small \{i.sosnovik, a.moskalev, a.w.m.smeulders\}@uva.nl}
}

\maketitle

\begin{abstract}
Siamese trackers turn tracking into similarity estimation between a template and the candidate regions in the frame. Mathematically, one of the key ingredients of success of the similarity function is translation equivariance. Non-translation-equivariant architectures induce a positional bias during training, so the location of the target will be hard to recover from the feature space. In real life scenarios, objects undergoe various transformations other than translation, such as rotation or scaling. Unless the model has an internal mechanism to handle them, the similarity may degrade. In this paper, we focus on scaling and we aim to equip the Siamese network with additional built-in scale equivariance to capture the natural variations of the target a priori. We develop the theory for scale-equivariant Siamese trackers, and provide a simple recipe for how to make a wide range of existing trackers scale-equivariant. We present SE-SiamFC, a scale-equivariant variant of SiamFC built according to the recipe. We conduct experiments on OTB and VOT benchmarks and on the synthetically generated T-MNIST and S-MNIST  datasets. We demonstrate that a built-in additional scale equivariance is useful for visual object tracking.

\blfootnote{Source code: \url{https://github.com/isosnovik/SiamSE}}
\end{abstract}

\section{Introduction}
\label{sec:intro}
Siamese trackers turn tracking into similarity estimation between a template and the candidate regions in the frame. The Siamese networks are successful because the similarity function is powerful: it can learn the variances of appearance very effectively, to such a degree that even the association of the frontside of an unknown object to its backside is usually successful. And, once the similarity is effective, the location of the candidate region is reduced to simply selecting the most similar candidate.

Mathematically, one of the key ingredients of the success of the similarity function is translation \textit{equivariance}, i.e. a translation in the input image is to result in the proportional translation in feature space. Non-translation-equivariant architectures will induce a positional bias during training, so the location of the target will be hard to recover from the feature space \cite{li2018siamrpn++, zhang2019deeper}. In real-life scenarios, the target will undergo more transformations than just translation, and, unless the network has an internal mechanism to handle them, the similarity may degrade. We start from the position that equivariance to common transformations should be the guiding principle in designing conceptually simple yet robust trackers. To that end, we focus on scale equivariance for trackers in this paper.

\ifwacvfinal
\begin{figure}[t]
  \centering
    \includegraphics[width=0.39\textwidth]{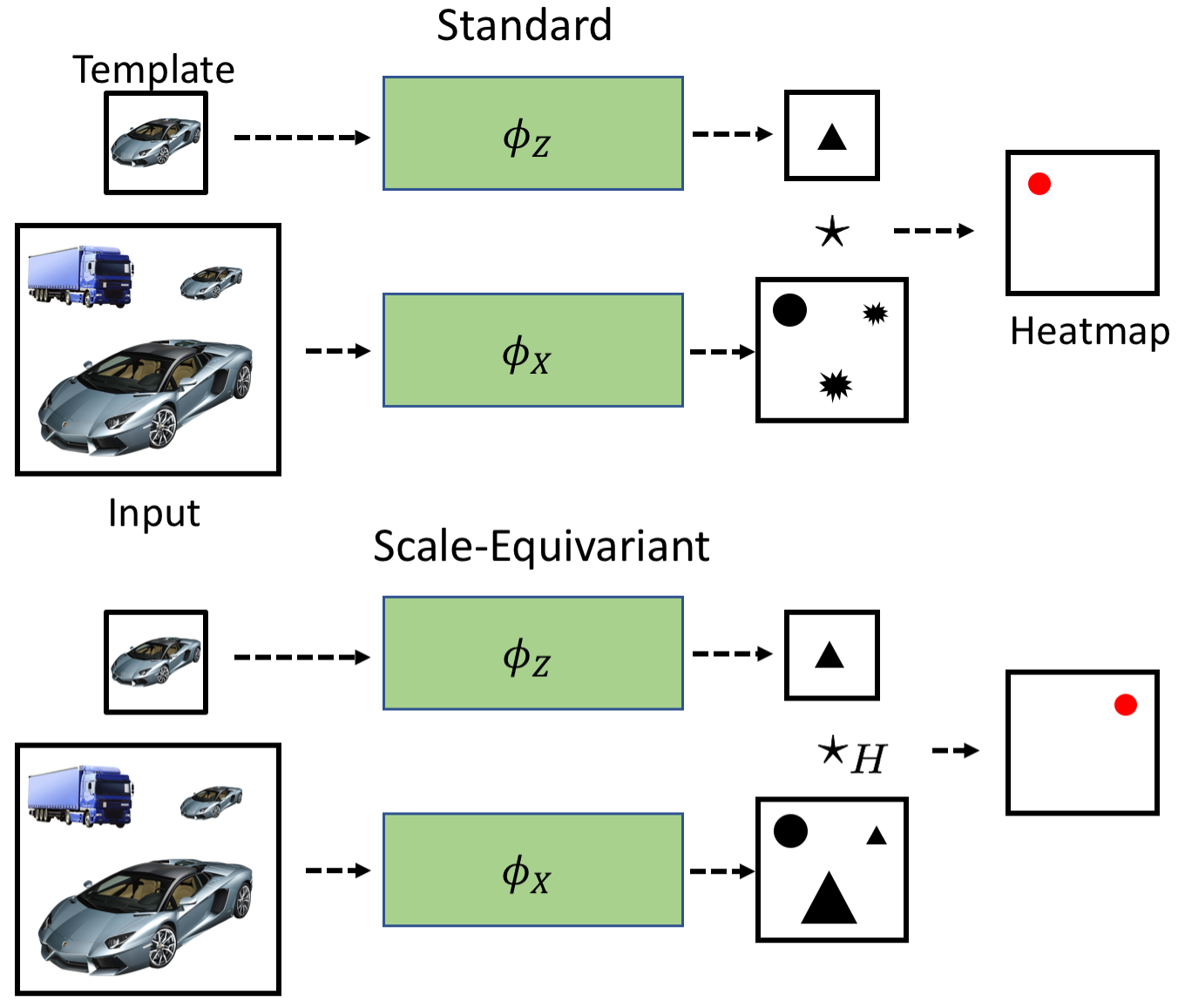}
  \caption{The standard version (top) and the scale-equivariant version (bottom) of a basic tracker. The scale-equivariant tracker has an internal notion of scale which allows for the distinction between similar objects which only differ in scale. The operator $\star$ denotes convolution, and $\star_{H}$ stands for scale-convolution.}
  \label{fig:fig1intro}
\end{figure}
\else
\begin{figure}[t]
  \centering
    \includegraphics[width=0.40\textwidth]{pics/scale_eq_trackers.png}
  \caption{The standard version (top) and the scale-equivariant version (bottom) of a basic tracker. The scale-equivariant tracker has an internal notion of scale which allows for the distinction between similar objects which only differ in scale. The operator $\star$ denotes convolution, and $\star_{H}$ stands for scale-convolution.}
  \label{fig:fig1intro}
\end{figure}
\fi

Measuring scale precisely is crucial when the camera zooms its lens or when the target moves into depth. However, scale is also important in distinguishing among objects in general. In following a marching band or in analyzing a soccer game, or when many objects in the video have a similar appearance (a crowd, team sports), the similarity power of Siamese trackers has a hard time locating the right target. In such circumstances, spatial-scale equivariance will provide a richer and hence more discriminative descriptor, which is essential to differentiate among several similar candidates in an image. And, even, as we will demonstrate, when the sequence does not show variation over scale, proper scale measurement is important to keep the target bounding box stable in size.

The common way to implement scale into a tracker is to train the network on a large dataset where scale variations occur naturally. However, as was noted in \cite{laptev2016ti}, such training procedures may lead to learning groups of re-scaled duplicates of almost the same filters. As a consequence, inter-scale similarity estimation becomes unreliable, see Figure \ref{fig:fig1intro} top. Scale-equivariant models have an internal notion of scale and built-in weight sharing among different filter scales. Thus, scale equivariance aims to produce the same distinction for all sizes, see Figure \ref{fig:fig1intro} bottom.

In this paper, we aim to equip the Siamese network with spatial and scale equivariance built-in from the start to capture the natural variations of the target \textit{a priori}. We aim to improve a broad class of tracking algorithms by enhancing their capacity of candidate distinction. We adopt recent advances \cite{sosnovik2019scale} in convolutional neural networks (CNNs) which handle scale variations explicitly and efficiently.

While scale-equivariant convolutional models have lead to success in image classification \cite{sosnovik2019scale,worrall2017harmonic}, we focus on their usefulness in \textit{object localization}. Where scale estimation has been used in the localization for tracking, it typically relies on brute-force multi-scale detection with an obvious computational burden \cite{danelljan2016beyond, bertinetto2016fully}, or on a separate network to estimate the scale \cite{danelljan2019atom, li2018high}. Both approaches will require attention to avoid bias and the propagation thereof through the network. Our new method treats scale and scale equivariance as a desirable fundamental property, which makes the algorithm conceptually easier. Hence, scale equivariance should be easy to merge into an existing network for tracking. Then, scale equivariance will enhance the performance of the tracker without further modification of the network or extensive data augmentation during the learning phase.

We make the following contributions:
\begin{itemize}
    \item We propose the theory for scale-equivariant Siamese trackers and provide a simple recipe of how to make a wide range of existing trackers scale-equivariant.
    \item We propose building blocks necessary for efficient implementation of scale equivariance into modern Siamese trackers and implement a scale-equivariant extension of the recent SiamFC+ \cite{zhang2019deeper} tracker.
    \item We demonstrate the advantage of scale-equivariant Siamese trackers over their conventional counterparts on popular benchmarks for sequences with and without apparent scale changes.

\end{itemize}
\section{Related Work}
\label{sec:related_work}

\begin{figure*}[t]
    \begin{center}
        \includegraphics[width=0.9\linewidth]{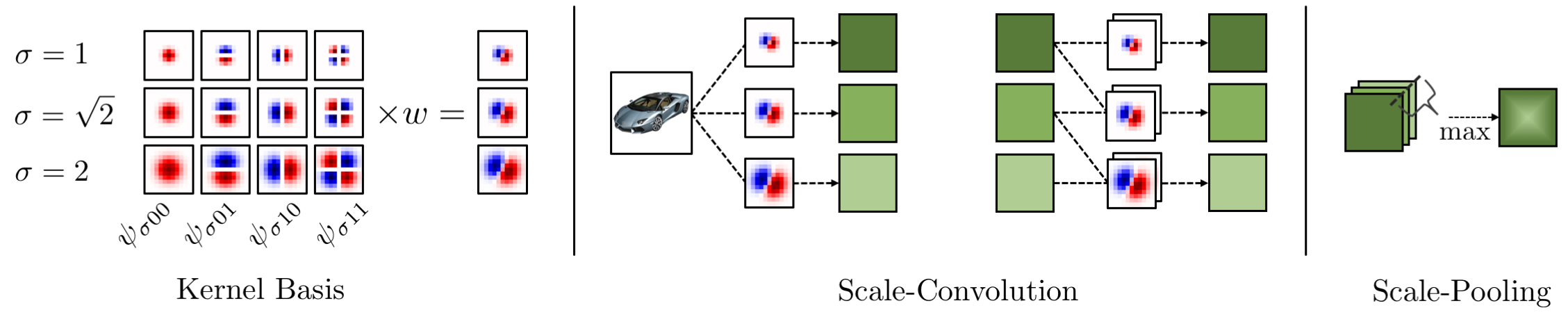}
    \end{center}
    \caption{Left: convolutional kernels use a fixed kernel basis on multiple scales, each with a set of trainable weights. Middle: a representation of scale-convolution using \Eqref{eq:scale_conv_def} for the first and all subsequent layers. Right: a scheme of scale-pooling, which transforms a 3D-signal into a 2D one without losing scale equivariance. As an example, we use a basis of 4 functions and 3 scales with a step of $\sqrt{2}$. Only one channel of each convolutional layer is demonstrated for simplicity.}
    \label{fig:scale_modules}
\end{figure*}

\paragraph{Siamese tracking} 
The challenge of learning to track arbitrary objects can be addressed by deep similarity learning \cite{bertinetto2016fully}. The common approach is to employ Siamese networks to compute the embeddings of the original patches. The embeddings are then fused to obtain a location estimate. Such formulation is general, allowing for a favourable flexibility in the design of the tracker. 
In \cite{bertinetto2016fully} Bertinetto \etal employ off-line trained CNNs as feature extractors. The authors compare dot-product similarities between the feature map of the template with the maps coming from the current frame and measure similarities on multiple scales. Held \etal \cite{held2016learning} suggest a detection-based Siamese tracker, where the similarity function is modeled as a fully-connected network. Extensive data augmentation is applied to learn a similarity function, which generalizes for various object transformations.
Li \etal \cite{li2018high} consider tracking as a one-shot detection problem to design Siamese region-proposal-networks \cite{renNIPS15fasterrcnn} by fusing the features from a fully-convolutional backbone. The recent ATOM \cite{danelljan2019atom} and DIMP \cite{bhat2019dimp} trackers employ a multi-stage tracking framework, where an object is coarsely localized by the online \textit{classification} branch, and subsequently refined in its position by the \textit{estimation} branch. From a Siamese perspective, in both \cite{danelljan2019atom, bhat2019dimp} the object embeddings are first fused to produce an initial location and subsequently  processed by the IoU-Net \cite{jiang2018iounet} to enhance the precision of the bounding box.

The aforementioned references have laid the foundation for most of the state-of-the-art trackers. These methods share an implicit or explicit attention to translation equivariance for feature extraction. The decisive role of translation equivariance is noted in \cite{bertinetto2016fully, li2018siamrpn++, zhang2019deeper}. Bertinetto \etal \cite{bertinetto2016fully} utilize fully-convolutional networks where the output directly commutes with a shift in the input image as a function of the total stride. Li \etal \cite{li2018siamrpn++} suggest a training strategy to eliminate the spatial bias introduced in non-fully-convolutional backbones. Along the same line, Zhang and Peng \cite{zhang2019deeper} demonstrated that deep state-of-the-art models developed for classification are not directly applicable for localization. And hence these models are not directly applicable to tracking as they induce positional bias, which breaks strict translation equivariance. We argue that transformations, other then translation, such as rotation may be equally important for certain classes of videos like sports and following objects in the sea or in the sky. 
And we argue that scale transformation is common in the majority of sequences due to the changing distances between objects and the camera. In this paper, we take on the latter class of transformations for tracking.
\vspace{-3mm}
\paragraph{Equivariant CNNs}
Various works on transformation-equivariant convolutional networks have been published recently. They extend the built-in property of translation-equivariance of conventional CNNs to a broader set of transformations. Mostly considered was roto-translation, as demonstrated on image classification  \cite{cohen2016group,cohen2016steerable,hoogeboom2018hexaconv,weiler2019general,romero2019co,romero2020attentive}, image segmentation \cite{weiler2018learning} and edge detection \cite{worrall2017harmonic}.

One of the first works on scale-translation-equivariant convolutional networks was by Marcos \etal \cite{marcos2018scale}. In order to process images on multiple scales, the authors resize and convolve the input of each layer multiple times, forming a stack of features which corresponds to variety of scales. The output of such a convolutional layer is a vector whose length encodes the maximum response in each position among different scales. The direction of the vector is derived from the scale, which gave the maximum. The method has almost no restrictions in the choice of admissible scales. As this approach relies on rescaling the image, the obtained models are significantly slower compared to conventional CNNs.
Thus, this approach is not suitable for being applied effectively in visual object tracking.

Worrall \& Welling \cite{worrall2019deep} propose Deep Scale-Spaces, an equivariant model which generalizes the concept of scale-space to deep networks. The approach uses filter dilation to analyze the images on different scales. It is almost as fast as a conventional CNN with the same width and depth. As the method is restricted to integer scale factors it is unsuited to applications in tracking where the scene dictates arbitrary scale factors. 

Almost simultaneously, three papers \cite{sosnovik2019scale,bekkers2019b,zhu2019scale} were proposed to implement scale-translation-equivariant networks with arbitrary scales. What they have in common is that they use a pre-calculated and fixed basis defined on multiple scales. All filters are then calculated as a linear combination of the basis and trainable weights. As a result, no rescaling is used. We prefer to use \cite{sosnovik2019scale}, as Sosnovik \etal propose an approach for building general scale-translation-equivariant networks with an algorithm for the fast implementation of the scale-convolution.

To date, the application of scale-equivariant networks was mostly demonstrated in image classification. Almost no attention was paid to tasks that involve object localization, such as visual object tracking. As we have noted above, it is a fundamentally different case. To the best of our knowledge, we demonstrate the first application of transformation-equivariant CNNs to visual object tracking.

\section{Scale-Equivariant Tracking}
\label{sec:se_tracking}
In this work, we consider a wide range of modern trackers which can be described by the following formula:
\begin{equation}
    \label{eq:tracker_def}
    h(z, x) = \phi_X(x) \star \phi_Z(z)
\end{equation}
where $z, x$ are the template and the input frame, and $\phi_X, \phi_Z$ are the functions which process them, and $\star$ is the convolution operator which implements a connection between two signals. The resulting value $h(z, x)$ is a heatmap that can be converted into a prediction by relatively simple calculations. Functions $\phi_X, \phi_Z$ here can be parametrized as feed-forward neural networks. For our analysis, it is both suitable if the weights of these networks are fixed or updated during training or inference. 
This pipeline describes the majority of Siamese trackers such as \cite{bertinetto2016fully,li2018high,li2018siamrpn++} and the trackers based on correlation filters \cite{danelljan2017eco,danelljan2016beyond}.

\subsection{Convolution is all you need}
Let us consider some mapping $g$. It is equivariant under a transformation $L$ if and only if there exists $L'$ such that $g \circ L = L' \circ g$. If $L'$ is the identity mapping, then the function $g$ is invariant under this transformation. A function of multiple variables is equivariant when it is equivariant with respect to each of the variables. In our analysis, we consider only transformations that form a transformation group, in other words, $L \in G$.

\begin{theorem}
\label{theorem:main}
A function given by \Eqref{eq:tracker_def} is equivariant under a transformation $L$ from group $G$  if and only if 
$\phi_X$ and $\phi_Z$ are constructed from $G$-equivariant convolutional layers and $\star$ is the $G$-convolution.
\end{theorem}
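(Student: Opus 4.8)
The plan is to prove the two implications separately: sufficiency by direct composition, and necessity by appeal to the classification of equivariant linear maps.

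For the \emph{if} direction, I would first recall that equivariance is preserved under composition: if each layer $\ell_i$ satisfies $\ell_i \circ L = L' \circ \ell_i$, then any stack of them does too, so $\phi_X$ and $\phi_Z$ inherit $G$-equivariance from their layers, say $\phi_X(Lx) = \pi(L)\phi_X(x)$ and $\phi_Z(Lz) = \pi(L)\phi_Z(z)$ for the induced representation $\pi$. I would then invoke the defining intertwining property of the $G$-convolution, namely that it carries the group action on its arguments to the group action on its output, $(\pi(L)a)\star_G b = \rho(L)(a \star_G b)$, and symmetrically in the second slot. Substituting the feature-map equivariances into $h(z,x)=\phi_X(x)\star_G\phi_Z(z)$ and pushing the action through $\star_G$ then yields $h(z, Lx) = \rho(L) h(z,x)$ and $h(Lz, x) = \rho'(L) h(z,x)$, i.e. equivariance in each variable as required. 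This direction is bookkeeping once the intertwining identity for $\star_G$ is in hand.

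For the \emph{only if} direction — which I expect to be the crux — I would argue directly, first pinning down the binding operation and then the layers. The core tool is the classical fact that a linear map between feature fields over the domain commutes with the $G$-action if and only if it is a $G$-convolution; holding one argument fixed makes $\star$ exactly such a linear map in the other, so $G$-equivariance of $h$ forces $\star$ to be the $G$-convolution rather than some other bilinear binding. With the binding pinned down as $\star_G$, I would show that the equivariance constraint descends to the feature extractors: $\phi_X(Lx)$ and $\pi(L)\phi_X(x)$ must give the same correlation against \emph{every} $\phi_Z(z)$, which forces $\phi_X(Lx)=\pi(L)\phi_X(x)$ provided the feature maps range over a spanning set, and symmetrically for $\phi_Z$. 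I would then propagate equivariance back through the networks layer by layer, again using the linear-equivariant-map classification together with the fact that pointwise nonlinearities commute with the channel-wise group action, to conclude each layer is itself a $G$-equivariant convolution.

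The main obstacle is precisely this backward propagation: equivariance of a composite does not in general imply equivariance of each factor, so the argument must exploit that the claim is structural — it must hold for all admissible inputs, not for one fixed $h$ — and that $\star_G$ is non-degenerate enough that probing the held-fixed argument over a spanning set transfers the constraint to each factor. I would be careful to state the spanning/non-degeneracy hypothesis on the feature maps that makes the ``same correlation for all $z$ implies equal feature maps'' step valid, and to verify that the boundary effects of finite-support convolutions do not spoil the injectivity used there.
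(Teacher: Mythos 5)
Your overall skeleton --- fix one argument at a time and treat $h$ as a map of the free argument, handling the two slots symmetrically --- matches the paper's, and your sufficiency direction is the same composition-plus-intertwining bookkeeping. Where you genuinely diverge is the necessity direction. The paper folds the binding operation into the network: with $z=z_0$ fixed, $h_X(x)=\phi_X(x)\star\phi_Z(z_0)$ is itself a feed-forward network whose final layer is the correlation with $\phi_Z(z_0)$, so a single appeal to the classification theorem of \cite{kondor2018generalization} (a feed-forward network is $G$-equivariant if and only if it is built from $G$-equivariant convolutional layers) simultaneously forces every layer of $\phi_X$ \emph{and} the operation $\star$ to be $G$-convolutional; repeating with $x=x_0$ fixed handles $\phi_Z$. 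You instead propose to re-derive that classification from scratch: first pin down $\star$ via the classification of equivariant linear intertwiners, then push equivariance backwards through the layers by probing the held-fixed argument over a spanning set. That route is more self-contained and has the virtue of making explicit the hypotheses (non-degeneracy of the probing family, pointwise nonlinearities commuting with the channel-wise action) that the citation leaves implicit. But the obstacle you flag --- that equivariance of a composite does not in general imply equivariance of each factor --- is precisely the content of the cited theorem, and your sketch does not actually discharge it: the layer-by-layer backward propagation requires the same structural assumptions (linear layers interleaved with fixed pointwise nonlinearities, modulo reparametrizations that leave the composite unchanged) under which the Kondor--Trivedi result is established. As written, your plan must either carry that argument to completion or simply invoke the classification theorem as the paper does; otherwise the necessity direction remains a programme rather than a proof.
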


The proof of Theorem \ref{theorem:main} is given in the supplementary material. A simple interpretation of this theorem is that \textit{a tracker is equivariant to transformations from $G$ if and only if it is fully $G$-convolutional}. The necessity of fully-convolutional trackers is well-known in tracking community and is related to the ability of the tracker to capture the main variations in the video --- the translation. In this paper, we seek to extend this ability to scale variations as well. Which, due to Theorem \ref{theorem:main} boils down to using scale-convolution and building fully scale-translation convolutional trackers.

\subsection{Scale Modules}
Given a function $f: \mathbb{R} \rightarrow \mathbb{R}$, a scale transformation is defined as follows:
\begin{equation}
    \label{eq:scale_def}
    L_s[f](t) = f(s^{-1}t), \quad \forall s \geq 0
\end{equation}
where cases with $s > 1$ are referred to as upscale and with $s < 1$ as downscale. Standard convolutional layers and convolutional networks are translation equivariant but not scale-equivariant \cite{sosnovik2019scale}. 
\paragraph{Parametric Scale-Convolution}
In order to build scale-equivariant convolutional networks, we follow the method proposed by Sosnovik \etal \cite{sosnovik2019scale}. We begin by choosing a complete basis of functions defined on multiple scales. Choosing the center of the function to be the point $(0, 0)$ in coordinates $(u, v)$, we use functions of the following form:
\begin{equation}
    \label{eq:steerable_funcs}
    \psi_{\sigma nm}(u, v) = A\frac{1}{\sigma^2}
    H_n\Big(\frac{u}{\sigma}\Big)
    H_m\Big(\frac{v}{\sigma}\Big)
    e^{ -\frac{u^2 + v^2}{2\sigma^2}}
\end{equation}
Here $H_n$ is a Hermite polynomial of the $n$-th 
order, and $A$ is a constant used for normalization. 
In order to build a basis of $N$ functions, we iterate over increasing pairs of $n$ and $m$. As the basis is complete, the number of functions $N$ is equal to the number of pixels in the original filter. We build such a basis for a chosen set of equidistant scales $\sigma$ and fix it:
\begin{equation}
    \label{eq:steerable_basis}
    \Psi_{\sigma } = \Big\{ 
    \psi_{\sigma 0 0},\;
    \psi_{\sigma 0 1},\;  
    \psi_{\sigma 1 0},\; 
    \psi_{\sigma 1 1} \dots\Big\}
\end{equation}

Kernels of convolutional layers are parametrized by trainable weights $w$ in the following way:
\begin{equation}
    \label{eq:scale_kernels}
    \kappa_{\sigma} = \sum_i \Psi_{\sigma i} w_i 
\end{equation}
As a result, each kernel is defined on multiple scales and no image interpolation is used.
Given a function of scale and translation $f(s, t)$ and a kernel  $\kappa_\sigma(s, t)$, a scale convolution is defined as:
\begin{equation}
    \label{eq:scale_conv_def}
    [f \star_H \kappa_\sigma](s, t)
    = \sum_{s'}  [f(s', \cdot) \star \kappa_{s\cdot\sigma}(s^{-1}s', \cdot)](t)
\end{equation}
The result of this operation is a stack of features each of which corresponds to a different scale. We end up with a 3-dimensional representation of the signal --- 2-dimensional translation $+$ scale.
We follow \cite{sosnovik2019scale} and denote scale-convolution as $\star_H$ in order to distinguish it with the standard one. \Figref{fig:scale_modules} demonstrates how a kernel basis is formed and how scale-convolutional layers work.

\vspace{-1mm}
\paragraph{Fast $\bm{1 \times 1}$ Scale-Convolution} \label{sec:fast11}
An essential building block of many backbone deep networks such as ResNets \cite{he2016deep} and Wide ResNets \cite{zagoruyko2016wide} is a $1 \times 1$ convolutional layer. We follow the interpretation of these layers proposed in \cite{lin2013network} --- it is a linear combination of channels. Thus, it has no spatial resolution. In order to build a scale-equivariant counterpart of $1 \times 1$ convolution, we do not utilize a kernel basis. As we pointed out before, the signal is stored as a 3 dimensional tensor for each channel. Therefore, for a kernel defined on $N_S$ scales, the convolution of the signal with this kernel is just a 3-dimensional convolution with a kernel of size $1\times1$ in spatial dimension, and with $N_S$ values in depth. This approach for $1 \times 1$ scale-convolution is faster than the special case of the algorithm proposed in \cite{sosnovik2019scale}.

\vspace{-1mm}
\paragraph{Padding}
Although zero padding is a standard approach in image classification for saving the spatial resolution of the image, it worsens the localization properties of convolutional trackers \cite{li2018siamrpn++, zhang2019deeper}. Nevertheless, a simple replacement of standard convolutional layers with scale-equivariant ones in very deep models is not possible without padding. Scale-equivariant convolutional layers have kernels of a bigger spatial extent because they are defined on multiple scales. For these reasons, we use circular padding during training and zero padding during testing in our models.

The introduced padding does not affect the feature maps which are obtained with kernels defined on small scales. It does not violate the translation equivariance of a network. We provide an experimental proof in supplementary material.

\vspace{-1mm}
\paragraph{Scale-Pooling}
In order to capture correlations between different scales and to transform a 3-dimensional signal into a 2-dimensional one, we utilize global max pooling along the scale axis. This operation does not eliminate the scale-equivariant properties of the network. We found that it is useful to additionally incorporate this module in the places where conventional CNNs have spatial max pooling or strides. The mechanism of scale-pooling is illustrated in \Figref{fig:scale_modules}.

\vspace{-1mm}
\paragraph{Non-parametric Scale-Convolution}
The convolutional operation which results in the heatmap of a tracker is non-parametric. Both the input and the kernel come from neural networks. Thus, the approach described in \Eqref{eq:scale_conv_def} is not suitable for this case. Given two functions $f_1, f_2$ of scale and translation the non-parametric scale convolution is defined as follows:
\begin{equation}
    \label{eq:scale_conv_nonparam}
    [f_1 \star_H f_2](s, t) = L_{s^{-1}}[L_s[f_1] \star f_2](t)
\end{equation}
Here $L_s$ is rescaling implemented as bicubic interpolation. Although it is a relatively slow operation, it is used only once in the tracker and does not heavily affect the inference time. The proof of the equivariance of this convolution is provided in supplementary material.

\subsection{Extending a Tracker to Scale Equivariance} 
\label{sec:Recipe}
We present a recipe to extend a tracker to scale equivariance.
\begin{enumerate}
    \item The first step is to estimate to what degree objects change in size in this domain, and then to select a set of scales $\sigma_1, \sigma_2, \dots \sigma_N$. This is a domain-specific hyperparameter.
    For example, a domain with significant scale variations requires a broader span of scales, while for more smooth sequences, the set may consist of just 3 scales around 1.
    \item For a tracker which can be described by \Eqref{eq:tracker_def}, derive $\phi_X$ and $\phi_Z$.
    \item For the networks represented by $\phi_X$ and $\phi_Z$, all convolutional layers need to be replaced with scale-convolutional layers. The basis for these layers is based on the chosen scales  $\sigma_1, \sigma_2, \dots \sigma_N$. 
    \item (Optional) Scale-pooling can be included to additionally capture inter-scale correlations between all scales.
    \item The connection operation $\star$ needs to be replaced with a non-parametric scale-convolution.
    \item (Optional) If the tracker only searches over spatial locations, scale-pooling needs to be included at the very end.
\end{enumerate}

The obtained tracker produces a heatmap $h(z, x)$ defined on scale and translation. Therefore, each position is assigned a vector of features that has both the measure of similarity and the scale relation between the candidate and the template. If additional scale-pooling is included, then all scale information is just aggregated in the similarity score.

Note that the overall structure of the tracker, as well as the training and inference procedures are not changed. 
Thus, the recipe allows for a simple extension of a tracker with little cost of modification.

\section{Scale-Equivariant SiamFC}

\label{sec:models}
While the proposed algorithm is applicable to a wide range of trackers, in this work, we focus on Siamese trackers. As a baseline we choose SiamFC \cite{bertinetto2016fully}. This model serves as a starting point for modifications for the many modern high-performance Siamese trackers.

\subsection{Architecture}
Given the recipe, here we discuss the actual implementation of the scale-equivariant SiamFC tracker (SE-SiamFC).
 
In the first step of the recipe, we assess the range of scales in the domain (dataset). In sequences presented in most of the tracking benchmarks, like OTB or VOT, objects change their size relatively slowly from one frame to the other. The maximum scale change usually does not exceed a factor of $1.5-2$. Therefore, we use $3$ scales with a step of $\sqrt{2}$ as the basis for the scale-convolutions. 
The next step in the recipe is to represent the tracker as it is done in \Eqref{eq:tracker_def}. SiamFC localizes the object as the coordinate \textit{argmax} of the heatmap $h(z, x) = \phi_Z(z) \star \phi_X(x)$, where $\phi_Z=\phi_X$ are convolutional Siamese backbones. 
Next, in step number 3, we modify the backbones by replacing standard convolutions by scale-equivariant convolutions. We follow step 4 and utilize scale-pooling in the backbones in order to capture additional scale correlations between features of various scales. According to step 5, the connecting correlation is replaced with non-parametric scale-convolution. SiamFC computes its similarity function as a 2-dimensional map, therefore, we follow step 6 and add extra scale-pooling in order to transform a 3-dimensional heatmap into a 2-dimensional one. Now, we can use exactly the same inference algorithm as in the original paper \cite{bertinetto2016fully}. We use the standard approach of scale estimation, based on the greedy selection of the best similarity for 3 different scales.

\begin{figure*}[t!]
  \centering
    \includegraphics[width=\textwidth]{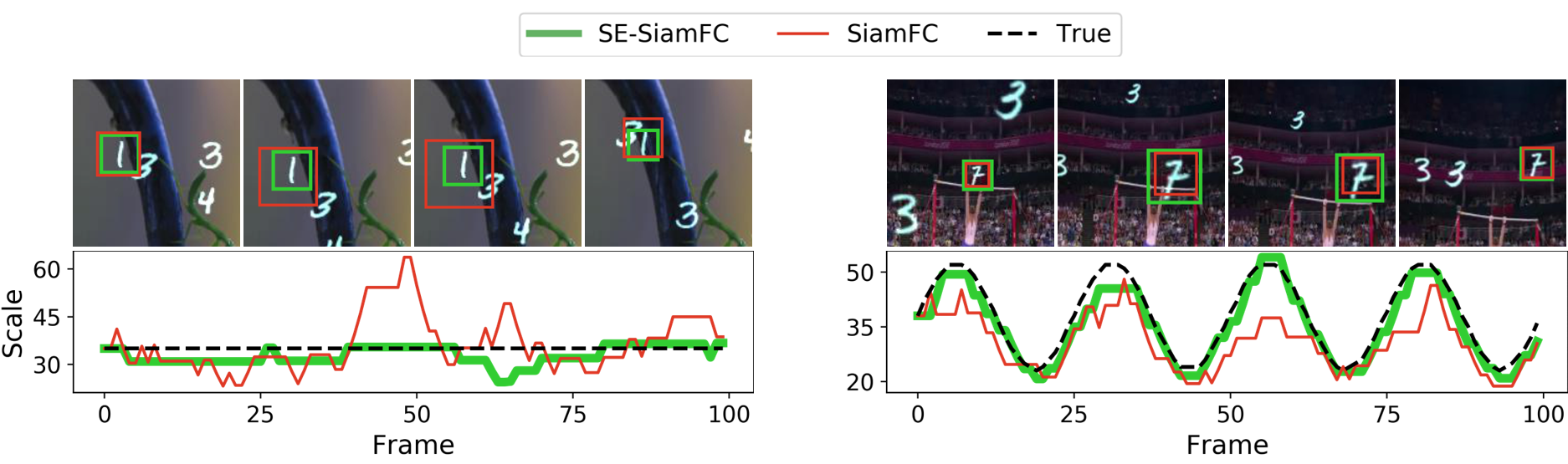}
  \caption{Top: examples of simulated T-MNIST and S-MNIST sequences. Bottom: scale estimation for equivariant and non-equivariant models. In the S-MNIST example, SE-SiamFC can estimate the scale more accurately. In the T-MNIST example, our model better preserves the scale of the target unchanged, while the non-scale-equivariant model is prone to oscillations in its scale estimate.}
  \label{fig:mnist}
\end{figure*}

\vspace{3mm}
\subsection{Weight Initialization}
\label{sec:weight_initialization}
An important ingredient of a successful model training is the initialization of its weights. A common approach is to use weights from an Imagenet \cite{deng2009imagenet} pre-trained model \cite{li2018high,zhang2019deeper,li2018siamrpn++}. In our case, however, this requires additional steps, as there are no available scale-equivariant models pre-trained on the Imagenet. 
We present a method for initializing a scale-equivariant model with weights from a pre-trained conventional CNN. The key idea is that a scale-equivariant network built according to \Secref{sec:Recipe} contains a sub-network that is identical to the one of the non-scale-equivariant counterpart. As the kernels of scale-equivariant models are parameterized with a fixed basis and trainable weights, our task is to initialize these weights.

We begin by initializing the inter-scale correlations by setting to $0$ all weights responsible for these connections. At this moment, up to scale-pooling, the scale-equivariant model consists of several networks parallel to, yet disconnected from one another, where the only difference is the size of their filters. For the convolutional layers with a non-unitary spatial extent, we initialize the weights such that the kernels of the smallest scale match those of the source model. Given a source kernel $\kappa'(u, v)$ and a basis $\Psi_{\sigma i}(u, v)$ with $\sigma=1$, weights $w_i$ are chosen to satisfy the linear system derived from \Eqref{eq:scale_kernels}:
\begin{equation}
    \label{eq:weight_init_1}
    \kappa_1(u, v) = \sum_i \Psi_{1 i}(u, v) w_i = \kappa'(u, v),
    \quad \forall u,v
\end{equation}
As the basis is complete by construction, its matrix form is invertible. The system has a unique solution with respect to $w_i$:
\begin{equation}
    w_i = \sum_{u,v}\Psi^{-1}_{1i}(u, v) \kappa'(u, v)
\end{equation}
All $1\times1$ scale-convolutional layers are identical to standard $1\times1$ convolutions after zeroing out inter-scale correlations. We copy these weights from the source model. We provide an additional illustration of the proposed initialization method in the supplementary material.

\begin{table}[t!]
\begin{center}
\begin{tabular}{@{}lllllc@{}}
\toprule
Tracker & T/T & T/S & S/T & S/S & \multicolumn{1}{l}{\# Params} \\ \midrule
SiamFC & \multicolumn{1}{c}{0.64} & \multicolumn{1}{c}{0.62} & \multicolumn{1}{c}{0.64} & \multicolumn{1}{c}{0.63} & 999 K \\
SE-SiamFC & \textbf{0.76} & \textbf{0.69} & \textbf{0.77} & \textbf{0.70} & 999 K \\ \bottomrule
\end{tabular}
\caption{AUC for models trained on T-MNIST and S-MNIST. T/S indicates that the model was trained on T-MNIST and tested on S-MNIST datasets. Bold numbers represent the best result for each of the training/testing scenarios.}
\label{tab:mnist}
\end{center}
\end{table}
\section{Experiments and Results}
\label{sec:experiments} 

\begin{table*}[h]
\begin{center}
\begin{tabular}{lclcclcclccclccc}
\toprule
\multirow{2}{*}{Tracker} & \multirow{2}{*}{Year} &  & \multicolumn{2}{c}{OTB-2013} &  & \multicolumn{2}{c}{OTB-2015} &  & \multicolumn{3}{c}{VOT2016} &  & \multicolumn{3}{c}{VOT2017} \\
\cmidrule{4-5} \cmidrule{7-8} \cmidrule{10-12} \cmidrule{14-16} 
&&& AUC & Prec. && AUC & Prec. && EAO & A & R && EAO & A & R \\ 
\midrule
SINT \cite{tao2016siamese} & 2016 && 0.64 & 0.85 && - & - && - & - & - && - & - & - \\
SiamFC \cite{bertinetto2016fully} & 2016 && 0.61 & 0.81 && 0.58 & 0.77 && 0.24 & 0.53 & 0.46 && 0.19 & 0.50 & 0.59 \\
DSiam \cite{guo2017learning} & 2017 && 0.64 & 0.81 && - & - && - & - & - && - & - & -\\ 
StructSiam \cite{zhang2018structured} & 2018 && 0.64 & 0.88 && 0.62 & 0.85 && 0.26 & - & - && - & - & - \\
TriSiam \cite{dong2018triplet} & 2018 && 0.62 & 0.82 && 0.59 & 0.78 && - & - & - && 0.20 & - & - \\
SiamRPN \cite{li2018high} & 2018 && - & - && 0.64 & 0.85 && 0.34 & 0.56 & 0.26 && 0.24 & 0.49 & 0.46 \\
SiamFC+ \cite{zhang2019deeper} & 2019 && 0.67 & 0.88 && 0.64 & 0.85 && 0.30 & 0.54 & 0.38 && 0.23 & 0.50 & 0.49 \\
\midrule 
SE-SiamFC & Ours && \textbf{0.68} & 0.90 && \textbf{0.66} & 0.88 && \textbf{0.36} & 0.59 & 0.24 && \textbf{0.27} & 0.54 & 0.38 \\
\bottomrule
\end{tabular}
\caption{Performance comparisons on OTB-2013, OTB-2015, VOT2016, and VOT2017 benchmarks. Bold numbers represent the best result for each of the benchmarks.}
\label{tab:results}
\end{center}
\end{table*}

\subsection{Translation-Scaling MNIST}
\label{sec:experiments_mnist}

To test the ability of a tracker to cope with translation and scaling, we conduct an experiment on a simulated dataset with controlled factors of variation. We construct the datasets of translating (T-MNIST) and translating-scaling (S-MNIST) digits.

In particular, to form a sequence, we randomly sample up to $8$ MNIST digits with backgrounds from the GOT10k dataset \cite{huang2019got10k}. Then, on each of the digits in the sequence independently, a smoothed Brownian motion model induces a random translation. Simultaneously, for S-MNIST, a smooth scale change in the range $[0.67, 1.5]$ is induced by the sine rule:
\begin{equation}
    s_i(t) = \frac{h-l}{2} \big{[}\sin(\frac{t}{4} + \beta_i) + 1)\big{]} + l
\end{equation}
where $s_i(t)$ is the scale factor of the $i$-th digit in the $t$-th frame, $h,l$ are upper and lower bounds for scaling, and $\beta_i \in [0, 100]$ is a phase, sampled randomly for each of the digits. In total, we simulate 1000 sequences for training and 100 for validation. Each sequence has a length of $100$ frames.
We compare two configurations of the tracker: (i) SiamFC with a shallow backbone and (ii) its scale-equivariant version SE-SiamFC. We conduct the experiments according to $2 \times 2$ scenarios: the models are trained on either S-MNIST or T-MNIST and are subsequently tested on either of them. The results are listed in Table \ref{tab:mnist}. See supplementary material for a detailed description of the architecture, training, and testing procedures.

As can be seen from Table \ref{tab:mnist}, the equivariant version outperforms its non-equivariant counterpart in all scenarios. The experiment on S-MNIST, varying the scale of an artificial object, shows that the scale-equivariant model has a superior ability to precisely follow the change in scale compared to the conventional one. The experiment on T-MNIST shows that (proper) measurement of scale is important even in the case when the sequence does not show a change in scale, where the observed scale in SE-SiamFC fluctuates much less than it does in the baseline (see Figure \ref{fig:mnist}).

\subsection{Benchmarking}\label{sec:benchmarking}
We compare the scale-equivariant tracker against a non-equivariant baseline on popular tracking benchmarks. We test SE-SiamFC with a backbone from \cite{zhang2019deeper} against other popular Siamese trackers on OTB-2013, OTB-2015, VOT2016, and VOT2017. The benchmarks are chosen to allow direct comparison with the baseline \cite{zhang2019deeper}. 
We compare against the results published in the original paper. Although additional results are presented online\footnote{\url{https://github.com/researchmm/SiamDW}}, we couldn't reproduce them in a reasonable amount of time.
\vspace{-3mm}
\paragraph{Implementation details} The parameters of our model are initialized with weights pre-trained on Imagenet by a method described in Section \ref{sec:weight_initialization}. We use the same training procedure as in the baseline. See supplementary material for a detailed description of the architecture.

The pairs for training are collected from the GOT10k \cite{huang2019got10k} dataset. We adopt the same prepossessing and augmentation techniques as in \cite{zhang2019deeper}.  The inference procedure remains unchanged compared to the baseline.

\begin{figure}[!t]
    \begin{center}
        \includegraphics[width=0.98\linewidth]{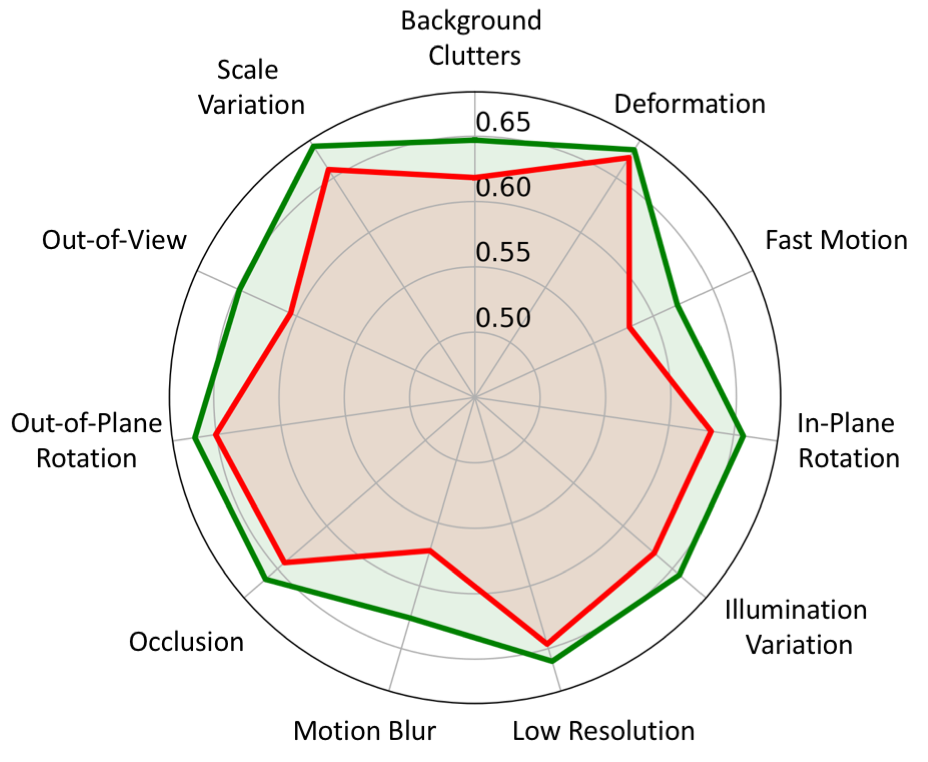}
    \end{center}
    \caption{Comparison of AUC on OTB-2013 with different factors of variations. The red polygon corresponds to the baseline SiamFC+ and the green polygon --- to SE-SiamFC.}
    \label{fig:rpg}
\end{figure}

\vspace{-3mm}
\paragraph{OTB} 

We test on the OTB-2013 \cite{otb13} and OTB-2015 \cite{otb15} benchmarks. Each of the sequences in the OTB datasets carries labels from 11 categories of difficulty in tracking the sequence. Examples of these labels include: occlusion, scale variation, in-pane rotation, \textit{etc.}  We employ a standard one-pass evaluation (OPE) protocol to compare our method with other trackers by the area under the success curve (AUC) and precision. 

The results are reported in Table \ref{tab:results}. Our scale-equivariant tracker outperforms its non-equivariant counterpart by more than $3\%$ on OTB-2015 in both AUC and precision, and by $1.4\%$ on OTB-2013. When summarized at each label of difficulty (see Figure \ref{fig:rpg}), the proposed scale-equivariant tracker is seen to improve all sequence types, not only those labeled with ``scale variation''. 

We attribute this to the fact that the ``scale variation'' tag in the OTB benchmark only indicates the sequences with a relatively big change in scale factors, while up to a certain degree, scaling is present in almost any video sequence. Moreover, scaling may be present implicitly, in the form of the same patterns being observed on multiple scales. An ability of our model to exploit this leads to better utilization of trainable parameters and a more discriminative Siamese similarity as a result.

\vspace{-3mm}
\paragraph{VOT} We next evaluate our tracker on VOT2016 and VOT2017 datasets \cite{vottoolkit}. The performance is evaluated in terms of average bounding box overlap ratio (A), and the robustness (R). These two metrics are combined into the Expected Average Overlap (EAO), which is used to rank the overall performance.

The results are reported in Table \ref{tab:results}. On VOT2016 our scale-equivariant model shows an improvement from $0.30$ to $0.36$ in terms of EAO, which is a $20\%$ gain compared to the non-equivariant baseline. On VOT2017, the increase in EAO is $17\%$.

We qualitatively investigated the sequences with the largest performance gain and observed that the most challenging factor for our baseline is the rapid scaling of the object. Even when the target is not completely lost, the imprecise bounding box heavily influences the overlap with the ground truth and the final EAO. Our scale-equivariant model better adapts to the fast scaling and delivers tighter bounding boxes. We provide qualitative results in the supplementary material. 

\begin{table}[t!]
\centering
\begin{tabular}{@{}lcc@{}}
\toprule
\multicolumn{1}{c}{Model} & Pretrained weigths & AUC \\ \midrule
SiamFC+ (aug. 5\%) & \cmark &0.668 \\
SiamFC+ (aug. 20\%) & \cmark &0.668 \\
SiamFC+ (aug. 50\%) & \cmark &0.664 \\
\midrule
SE-SiamFC $\sigma=1.2$ & \cmark &0.677 \\
SE-SiamFC $\sigma=1.3$ & \cmark &0.680 \\
SE-SiamFC $\sigma=1.4$ & \cmark &\textbf{0.681} \\
SE-SiamFC $\sigma=1.5$ & \cmark &0.678 \\
\midrule
SE-SiamFC $\sigma=1.4$ & \xmark &0.553 \\
\bottomrule
\end{tabular}
\caption{Ablation study on the OTB-2013 benchmark. The parameter $\sigma$ stands for the step between scales in scale-equivariant models. Bold numbers represent the best result.}
\label{tab:ablation}
\end{table}

\subsection{Ablation Study}

We conduct an ablation study on the OTB-2013 benchmark to investigate the impact of scale step, weight initialization, and fast $1\times 1$ scale-convolution. We also test the baseline SiamFC+ model with various levels of scale data augmentation during the training. We follow the same training and testing procedure as in Section \ref{sec:benchmarking} for all experiments. In the weight initialization experiment, however, we do not use gradual weights unfreezing, but train the whole model end-to-end from the first epoch.
\vspace{-3mm}
\paragraph{Scale step} We investigate the impact of scale step $\sigma$, which defines a set of scales our model operates on. We train and test SE-SiamFC with various scale steps. Results are shown in Table \ref{tab:ablation}. It can be seen that the resulting method outperforms the baseline on a range of scale steps. We empirically found that $\sigma=1.4$ achieves the best performance.
\vspace{-3mm}
\paragraph{Scale data augmentation} Data augmentation is a common way to improve model generalization over different variations. Since our method is focused on scale, we compare SE-SiamFC against a baseline trained with different levels of scale data augmentation. Our results indicate (Table \ref{tab:ablation}) that scale augmentation does not improve the performance of the conventional non-equivariant tracker.
\vspace{-4mm}
\paragraph{Weight Initialization} We train and test SE-SiamFC model, where weights initialized randomly \cite{glorot2010init,weiler2018learning}. As can be seen from Table \ref{tab:ablation}, random initialization results in a \textit{19\%} performance drop compared to the proposed initialization technique.
\vspace{-4mm}
\paragraph{Fast $1\times1$ scale-convolution} We compare the speed of $1\times1$ scale-convolution from \cite{sosnovik2019scale} and the proposed fast implementation. Implementation from \cite{sosnovik2019scale} requires $\text{450 / 1650 }\mu\text{s}$, while our implementation requires $\text{67 / 750 }\mu\text{s}$ for forward / backward pass respectively, which is more than 6 times faster. In our experiments, the usage of fast $1\times1$ scale-convolution results in $30-40\%$ speedup of a tracker.

\section{Discussion}
\label{sec:conclusion}

In this work, we argue about the usefulness of additional scale equivariance in visual object tracking for the purpose of enhancing Siamese similarity estimation. We present a general theory that applies to a wide range of modern Siamese trackers, as well as all the components to turn an existing tracker into a scale-equivariant version. Moreover, we prove that the presented components are both necessary and sufficient to achieve built-in scale-translation equivariance. We sum up the theory by developing a simple recipe for extending existing trackers to scale equivariance. We apply it to develop SE-SiamFC --- a scale-equivariant modification of the popular SiamFC tracker.

We experimentally demonstrate that our scale-equivariant tracker outperforms its conventional counterpart on OTB and VOT benchmarks and on the synthetically generated T-MNIST and S-MNIST datasets, where T-MNIST is designed to keep the object at a constant scale, and S-MNIST varies the scale in a known manner.

The experiments on T-MNIST and S-MNIST show the importance of proper scale measurement for all sequences, regardless of whether they have scale change or not. For the standard OTB and VOT benchmarks, our tracker proves the power of scale equivariance. It is seen to not only improves the tracking in the case of scaling, but also when other factors of variations are present (see \Figref{fig:rpg}). It affects the performance in two ways: it prevents erroneous jumps to similar objects at a different size and it provides a better consistent estimate of the scale.

\subsubsection*{Acknowledgments}
We thank Thomas Andy Keller, Konrad Groh, Zenglin Shi and Deepak Gupta for valuable comments, discussions and help with the project. We appreciate the help of Zhipeng Zhang and Houwen Peng in reproducing the experiments from \cite{zhang2019deeper}.

{\small
\bibliographystyle{ieee_fullname}
\bibliography{bib}
}

\appendix
\clearpage

\section{Proofs}

\subsection{Convolution is all you need}
In the paper we consider trackers of the following form
\begin{equation}
    \label{eq:supp_tracker_def}
    h(z, x) = \phi_X(x) \star \phi_Z(z)
\end{equation}
where $\phi_X$ and $\phi_Z$ are parameterized with feed-forward neural networks.

\begin{theorem}
\label{theorem:main}
A function given by \Eqref{eq:supp_tracker_def} is equivariant under a transformation $L$ from group $G$  if and only if 
$\phi_X$ and $\phi_Z$ are constructed from $G$-equivariant convolutional layers and $\star$ is the $G$-convolution.
\end{theorem}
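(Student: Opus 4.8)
The plan is to prove the two directions separately, with both resting on a single structural fact: among linear maps between feature spaces that carry a $G$-action, the equivariant ones are \emph{exactly} the $G$-convolutions (the standard intertwiner characterization). Sufficiency is then a direct computation. Assuming each branch is a composition of $G$-equivariant convolutional layers, we have $\phi_X(Lx)=\bar{L}\phi_X(x)$ and $\phi_Z(Lz)=\bar{L}\phi_Z(z)$ for the induced output action $\bar{L}$; and assuming $\star=\star_G$, the $G$-convolution intertwines the action in each slot, $(\bar{L}a)\star_G b=\hat{L}(a\star_G b)$ and $a\star_G(\bar{L}b)=\hat{L}(a\star_G b)$. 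Substituting into \Eqref{eq:supp_tracker_def} yields $h(z,Lx)=\hat{L}\,h(z,x)$ and $h(Lz,x)=\hat{L}\,h(z,x)$, i.e. $h$ is equivariant in each variable, which is the definition of equivariance for a multivariable map.

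For necessity I would assume $h$ is equivariant in each argument for all admissible inputs and recover the three claims in turn. First, fixing one argument and letting the other range over its domain, the features emitted by $\phi_X$ and $\phi_Z$ sweep out a rich enough set that equivariance of $h$ can hold only if the binding operation commutes with $\bar{L}$ in each slot; since $\star$ is bilinear, the intertwiner characterization then forces $\star=\star_G$. With $\star_G$ identified, equivariance of $h$ in $x$ reads $\phi_X(Lx)\star_G\phi_Z(z)=(\bar{L}\phi_X(x))\star_G\phi_Z(z)$ for every $z$; as the template features $\phi_Z(z)$ range over a complete set, this forces $\phi_X(Lx)=\bar{L}\phi_X(x)$, so $\phi_X$ is equivariant, and symmetrically for $\phi_Z$. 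Finally, peeling each feed-forward network layer by layer and applying the intertwiner characterization to each linear stage (the pointwise nonlinearities being equivariant by construction) shows every convolutional layer must be a $G$-convolution.

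The hard part is the layerwise peeling in the necessity direction: a composition being equivariant does not in general force its factors to be equivariant. Making this step rigorous requires that the action on each hidden feature space be non-degenerate, so that equivariance propagated through the whole network pins down each layer uniquely. This is precisely the content of the Kondor--Trivedi characterization of equivariant feed-forward networks, which I would invoke rather than re-derive, together with the richness of the intermediate representations needed to justify the `sweeping' argument that identifies $\star$.
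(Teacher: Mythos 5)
Your proof rests on the same engine as the paper's --- the Kondor--Trivedi characterization of equivariant feed-forward networks --- and your sufficiency direction is the standard direct computation, so the two arguments agree in substance. Where you diverge is in how the necessity direction is decomposed. The paper fixes the template, $z=z_0$, and observes that $h_X(x)=\phi_X(x)\star\phi_Z(z_0)$ is itself a single feed-forward network whose final layer is the correlation with the fixed kernel $\phi_Z(z_0)$; one application of Kondor--Trivedi to this composite network then simultaneously forces every layer of $\phi_X$ \emph{and} the binding operation $\star$ to be $G$-convolutions, and the symmetric argument with $x=x_0$ fixed handles $\phi_Z$. You instead try to isolate the three conclusions one at a time: first pin down $\star$ by letting the features ``sweep out a rich enough set,'' then deduce equivariance of $\phi_X$ as a whole, then peel $\phi_X$ layer by layer. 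The peeling step is fine once you delegate it to Kondor--Trivedi, but the sweeping step is a genuine soft spot: you assert, rather than establish, that the image of $\phi_Z$ (resp.\ $\phi_X$) is rich enough to separate bilinear maps, and you acknowledge yourself that this richness is an unproven hypothesis. A degenerate branch (e.g.\ one with a low-dimensional or non-spanning image) would defeat that argument. The paper's formulation buys you exactly the avoidance of this assumption, because the fixed template is absorbed as the last layer of one network and the richness question never arises separately from whatever genericity Kondor--Trivedi already presumes. If you keep your decomposition, you must add and justify the spanning hypothesis on the intermediate representations; otherwise, adopt the paper's one-shot reduction.
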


\begin{proof}
Let us fix $z=z_0$ and introduce a function $h_X = h(x, z_0) =\phi_X(x) \star \phi_Z(z_0)$. This function is a feed-forward neural network. All its layers but the last one are contained in $\phi_X$ and the last layer is a convolution with $\phi_Z(z_0)$. According to \cite{kondor2018generalization} a feed-forward neural network is equivariant under transformations from $G$ if and only if it is constructed from $G$-equivariant convolutional layers. Thus, the function $h_X$ is equivariant under transformations from $G$ if and only if 
\begin{itemize}
    \item The function $\phi_X$ is constructed from $G$-equivariant convolutional layers 
    \item The convolution $\star$ is the $G$-convolution
\end{itemize}
If we then fix $x=x_0$, we can show that a function $h_Z = h(x_0, z) =\phi_X(x_0) \star \phi_Z(z)$ is equivariant under transformations from $G$ if and only if 
\begin{itemize}
    \item The function $\phi_Z$ is constructed from $G$-equivariant convolutional layers 
    \item The convolution $\star$ is the $G$-convolution
\end{itemize}

The function $h$ is equivariant under $G$ if and only if both the function $h_X$ and the function $h_Z$ are equivariant.
\end{proof}

\subsection{Non-parametric scale-convolution}
\begin{figure}[t]
    \begin{center}
        \includegraphics[width=0.99\linewidth]{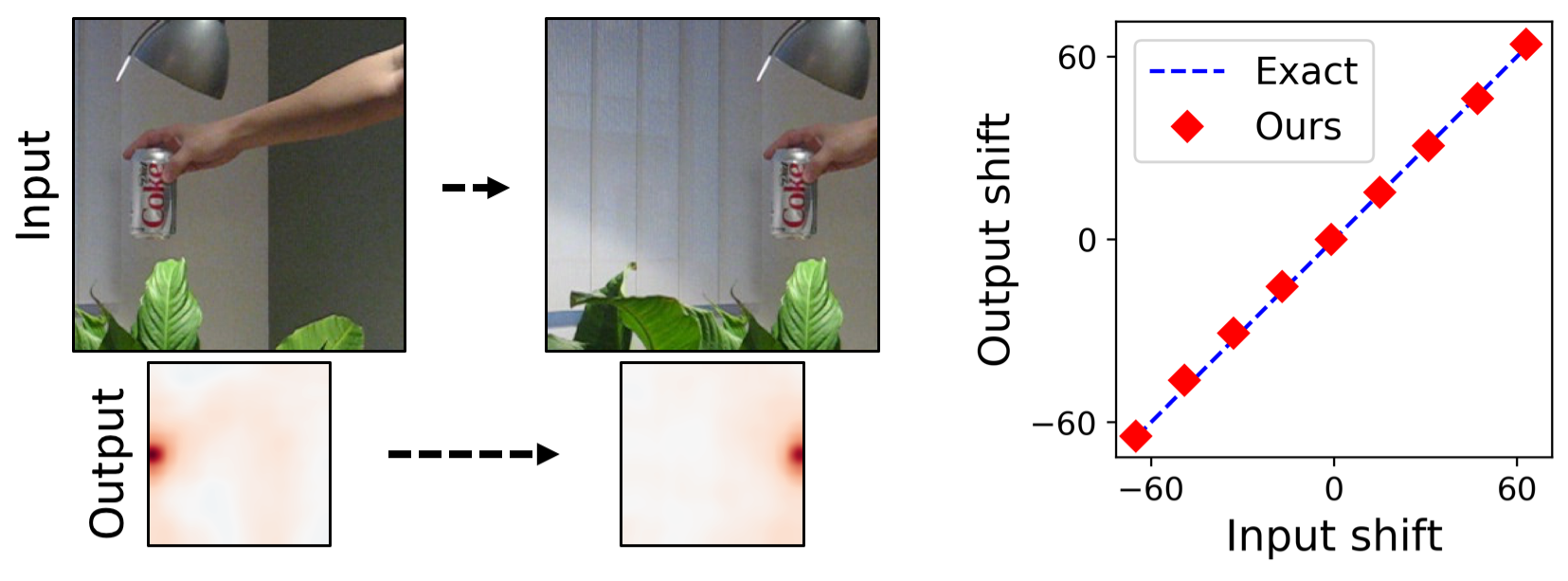}
    \end{center}
    \caption{Left: two samples from the simulated sequence. The input image is a translated and cropped version of the source image. The output is the heatmap produced by the proposed model. The red color represents the place where the object is detected. Right: correspondence between the input and the output shifts.}
    \label{fig:translation}
\end{figure}

Given two functions $f_1, f_2$ of scale and translation the non-paramteric scale convolution is defined as follows:
\begin{equation}
    \label{eq:supp_scale_conv_nonparam}
    [f_1 \star_H f_2](s, t) = L_{s^{-1}}[L_s[f_1] \star f_2](t)
\end{equation}

\begin{lemma}
A function given by \Eqref{eq:supp_scale_conv_nonparam} is equivariant under scale-translation.
\end{lemma}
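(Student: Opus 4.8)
The plan is to verify equivariance separately for the two one-parameter families that generate the scale--translation group---pure translations and pure dilations---since a map that commutes with each generator commutes with every element of the group. Throughout I write $L_s$ for the dilation of \Eqref{eq:scale_def} acting on the spatial argument and $T_b$ for a spatial translation, and I use the two elementary identities $L_s L_a = L_{sa}$ and $L_s T_b = T_{sb} L_s$, both of which follow directly from $L_s[f](t)=f(s^{-1}t)$. The ordinary convolution $\star$ appearing inside \Eqref{eq:supp_scale_conv_nonparam} is assumed known to be translation equivariant.

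First I would dispatch translation equivariance. Translating the input, $f_1 \mapsto T_b[f_1]$, and pushing $T_b$ through the inner dilation with $L_s T_b = T_{sb}L_s$, the definition \Eqref{eq:supp_scale_conv_nonparam} turns $L_s[T_b f_1]\star f_2$ into $T_{sb}(L_s[f_1])\star f_2$. Because $\star$ is translation equivariant, $T_{sb}$ factors out of the correlation, and the outer operator converts the shift back via $L_{s^{-1}} T_{sb} = T_b L_{s^{-1}}$; hence the whole output is translated by $b$ at every scale $s$, which is exactly the translation part of the group action on the joint $(s,t)$ domain. This step is routine and rests only on the translation equivariance of $\star$.

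The substantive step is dilation equivariance. Replacing $f_1$ by its dilate $L_a[f_1]$, the composition law collapses the inner operator, $L_s[L_a f_1]=L_{sa}[f_1]$, so the value of the correlation at output scale $s$ of the dilated input coincides with that at output scale $sa$ of the original input. The key observation is that the sandwiching by $L_s$ (inner) and $L_{s^{-1}}$ (outer) is engineered precisely so that, after the re-indexing $s \mapsto sa$ of the scale coordinate, the accompanying spatial rescalings compose, again through $L_sL_a=L_{sa}$, into exactly the dilation the regular representation prescribes on $(s,t)$. Carrying out this change of variables in the scale coordinate and simplifying the nested $L_\bullet$ operators then exhibits $[L_a f_1]\star_H f_2$ as the correctly dilated copy of $f_1 \star_H f_2$. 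The argument for the template branch $f_2$ is symmetric.

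The main obstacle is the bookkeeping around the double role of $s$: it is simultaneously the free scale coordinate of the output and the parameter of the rescaling operators inside the definition, so the change of variables in the scale coordinate must be tracked jointly with the spatial interpolations $L_s$ and $L_{s^{-1}}$. Verifying that these two effects combine into the single group action---rather than leaving a residual spatial scale factor---is the delicate point, and it is exactly where the specific placement of $L_s$ and $L_{s^{-1}}$ in \Eqref{eq:supp_scale_conv_nonparam} is used. I would also note that all of these identities hold exactly in the continuous model; in the implementation $L_s$ is realized by bicubic interpolation, so equivariance holds up to the usual interpolation error, which does not affect the formal statement.
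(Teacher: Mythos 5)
Your proposal is correct and follows essentially the same route as the paper's proof: it establishes equivariance separately for translations (via $L_sT_b=T_{sb}L_s$ and the translation equivariance of $\star$) and for dilations (via $L_sL_a=L_{sa}$ and inserting $L_{s^{-1}}=L_aL_{(sa)^{-1}}$ to realize the re-indexed scale coordinate), then combines the two and invokes symmetry for $f_2$. The only difference is presentational --- you describe the dilation computation verbally where the paper writes out the chain of equalities --- so nothing substantive is missing.
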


\begin{figure*}[!th]
    \begin{center}
        \includegraphics[width=0.95\linewidth]{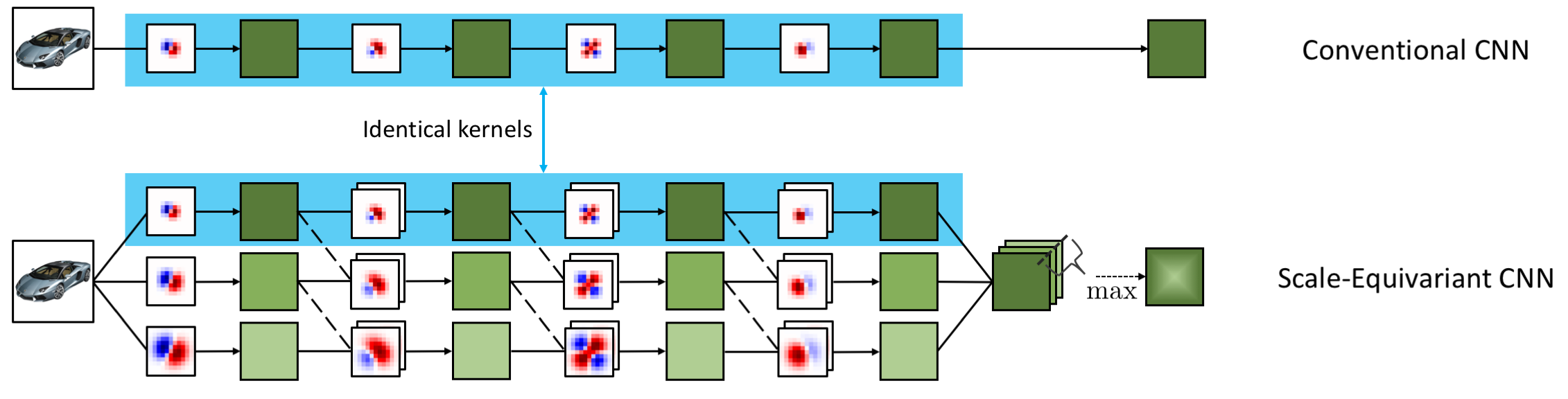}
    \end{center}
    \caption{The visualization of the weight initialization scheme from a pretrained model. Dashed connections are initialized with $0$.}
    \label{fig:supp_weight_init}
\end{figure*}

\begin{proof}
A function given by \Eqref{eq:supp_scale_conv_nonparam} is equivariant under scale transformations of $f_1$, indeed
\begin{equation}
\begin{split}
   \label{eq:supp_nonparam_proof_1}
    [L_{\hat{s}}[f_1] \star_H f_2](s, t) 
    &= L_{s^{-1}}[L_{s\hat{s}}[f_1] \star f_2](t) \\
    &= L_{\hat{s}}L_{(s\hat{s})^{-1}}[L_{s\hat{s}}[f_1] \star f_2](t) \\
    &= L_{\hat{s}}[f_1 \star_H f_2](s\hat{s}, t)
\end{split}
\end{equation}

For a pair of scale and translation $s,\hat{t}$ we have the following property of the joint transformation $L_{s}T_{\hat{t}} = T_{\hat{t}s}L_{s}$ from \cite{sosnovik2019scale}, where $T_{\hat{t}}$ is the translation operator defined as $T_{\hat{t}}[f](t) = f(t-\hat{t})$. Now we can show the following:
\begin{equation}
\begin{split}
   \label{eq:supp_nonparam_proof_2}
    [T_{\hat{t}}[f_1] \star_H f_2](s, t) 
    &= L_{s^{-1}}[L_{s}[T_{\hat{t}}[f_1]] \star f_2](t) \\
    &= L_{s^{-1}}[T_{\hat{t}s}L_{s}[f_1] \star f_2](t) \\
    &= L_{s^{-1}}T_{\hat{t}s}[L_{s}[f_1] \star f_2](t) \\
    &= T_{\hat{t}}L_{s^{-1}}[L_{s}[f_1] \star f_2](t) \\
    &= T_{\hat{t}}[f_1 \star_H f_2](t) \\
\end{split}
\end{equation}
 Therefore, a function given by \Eqref{eq:supp_scale_conv_nonparam} is also equivariant under translations of $f_1$. The equivariance of the function with respect to a joint transformation follows from the equivariance to each of the transformations separately \cite{sosnovik2019scale}. 

We proved the equivariance with respect to $f_1$. The proof with respect to $f_2$ is analogous. 
\end{proof}

\section{Weight initialization}

The proposed weight initialization scheme from a pretrained model is depicted in \Figref{fig:supp_weight_init}.
\section{Experiments}

\subsection{Padding}
We conduct an experiment to verify that the proposed padding technique does not violate translation equivariance of convolutional trackers. We choose an image and select a sequence of translated and cropped windows inside of it. We process this sequence with a deep model that consists of the proposed convolutional layers and follows the inference procedure described in \cite{zhang2019deeper}. We derive the predicted location of the object and compare its value to the input shift. \Figref{fig:translation} demonstrates that the input and the output translations have nearly identical values.

\subsection{Translating-Scaling MNIST}

\begin{table}[h]
\centering
\begin{tabular}{|c|c|c|}
\hline
Stage & SiamFC & SE-SiamFC \\ \hline
\multirow{2}{*}{Conv1} & \multicolumn{2}{c|}{\multirow{2}{*}{$\begin{bmatrix}3 \times 3, 96 , s=2 \\\end{bmatrix}$}} \\
 & \multicolumn{2}{c|}{} \\ \hline
\multirow{2}{*}{Conv2} & \multicolumn{2}{c|}{\multirow{2}{*}{$\begin{bmatrix}3 \times 3, 128 , s=2 \\\end{bmatrix}$}} \\
 & \multicolumn{2}{c|}{} \\ \hline
\multirow{2}{*}{Conv3} & \multicolumn{2}{c|}{\multirow{2}{*}{$\begin{bmatrix}3 \times 3, 256 , s=2 \\\end{bmatrix}$}} \\
 & \multicolumn{2}{c|}{} \\ \hline
\multirow{2}{*}{Conv4} & \multicolumn{2}{c|}{\multirow{2}{*}{$\begin{bmatrix}3 \times 3, 256 , s=1 \\\end{bmatrix}$}} \\
 & \multicolumn{2}{c|}{} \\ \hline
\multirow{2}{*}{Connect.} & \multirow{2}{*}{Cross-correlation} & \multirow{2}{*}{\begin{tabular}[c]{@{}c@{}}Non-parametric\\ scale-convolution\end{tabular}} \\
 &  &  \\ \hline
\multirow{2}{*}{\# Params} & 
\multirow{2}{*}{$999$ K}& 
\multirow{2}{*}{$999$ K}\\
 & & \\ \hline
\end{tabular}
\caption{Architectures used in T/S-MNIST experiment. All convolutions in SE-SiamFC are scale-convolutions.}
\label{tab:mnist_arch}
\end{table}

For both T-MNIST and S-MNIST, we use architectures described in Table \ref{tab:mnist_arch}. 2D BatchNorm and ReLU are inserted after each of the convolutional layers except the last one. We do not use max pooling to preserve strict translation-equivariance. 

We train both models for 50 epochs using SGD with a mini-batch of $8$ images and exponentially decay the learning rate from $10^{-2}$ to $10^{-5}$. We set the momentum to $0.9$ and the weight decay to $0.5^{-4}$. A binary cross-entropy loss as in \cite{bertinetto2016fully} is used. The inference algorithm is the same for both SiamFC and SE-SiamFC and follows the original implementation \cite{bertinetto2016fully}.

\subsection{OTB and VOT}
For OTB and VOT experiments we used architectures described in Table \ref{tab:otbvot_arch}. We use the baseline \cite{zhang2019deeper} with Cropping Inside Residual (CIR) units. SE-SiamFC is constructed directly from the baseline as described in the paper. In Table \ref{tab:otbvot_arch} the kernel size refers to the smallest scale $\sigma = 1$ in the network. The sizes of the kernels, which correspond to bigger scales are $9 \times 9$ for Conv1 and $5 \times 5$ for other layers. \Figref{fig:qual_supp} gives a qualitative comparison of the proposed method and the baseline. 

\begin{table}[ht]
\begin{tabular}{|c|c|c|}
\hline
Stage & SiamFC+ & SE-SiamFC \\ \hline
\multirow{2}{*}{Conv1} & \multirow{2}{*}{$\begin{bmatrix}7 \times 7, 64 , s=2 \\\end{bmatrix}$} & \multirow{2}{*}{$\begin{bmatrix}7 \times 7, 64 , s=2 \\\end{bmatrix}$} \\
 &  &  \\ \hline
\multirow{6}{*}{Conv2} & \multicolumn{2}{c|}{\multirow{2}{*}{max pool $\begin{bmatrix}2 \times 2, s=2 \\\end{bmatrix}$}} \\
 & \multicolumn{2}{c|}{} \\ \cline{2-3} 
 & \multirow{4}{*}{$\begin{bmatrix}1 \times 1, 64 \\ 3 \times 3, 64 \\ 1 \times 1, 256 \\ \end{bmatrix} \times 3$} & \multirow{4}{*}{$\begin{bmatrix}1 \times 1, 64, i=2 \\ 3 \times 3, 64 \\ 1 \times 1, 256 \\ \end{bmatrix} \times 3$} \\
 &  &  \\
 &  &  \\
 &  &  \\ 
 \hline
\multirow{4}{*}{Conv3} & \multirow{4}{*}{$\begin{bmatrix}1 \times 1, 128 \\ 3 \times 3, 128 \\ 1 \times 1, 512 \\ \end{bmatrix} \times 3$} & \multirow{4}{*}{$\begin{bmatrix}1 \times 1, 128, \textit{sp} \\ 3 \times 3, 128 \\ 1 \times 1, 512 \\ \end{bmatrix} \times 3$} \\
 &  &  \\
 &  &  \\
 &  &  \\ 
 \hline
 Connect. & Cross-correlation & \begin{tabular}[c]{@{}c@{}}Non-parametric\\ scale-convolution\end{tabular} \\ \hline
\multirow{2}{*}{\# Params} & 
\multirow{2}{*}{$1.44$ M}& 
\multirow{2}{*}{$1.45$ M}\\
 & & \\ \hline
\end{tabular}
\caption{Architectures used in OTB/VOT experiments. All convolutions in SE-SiamFC are scale-convolutions. \textit{s} refers to stride, \textit{sp} denotes scale pooling, \textit{i} --- is the size of the kernel in a scale dimension.}
\label{tab:otbvot_arch}
\end{table}

\begin{figure*}[!b]
    \begin{center}
        \includegraphics[width=\linewidth]{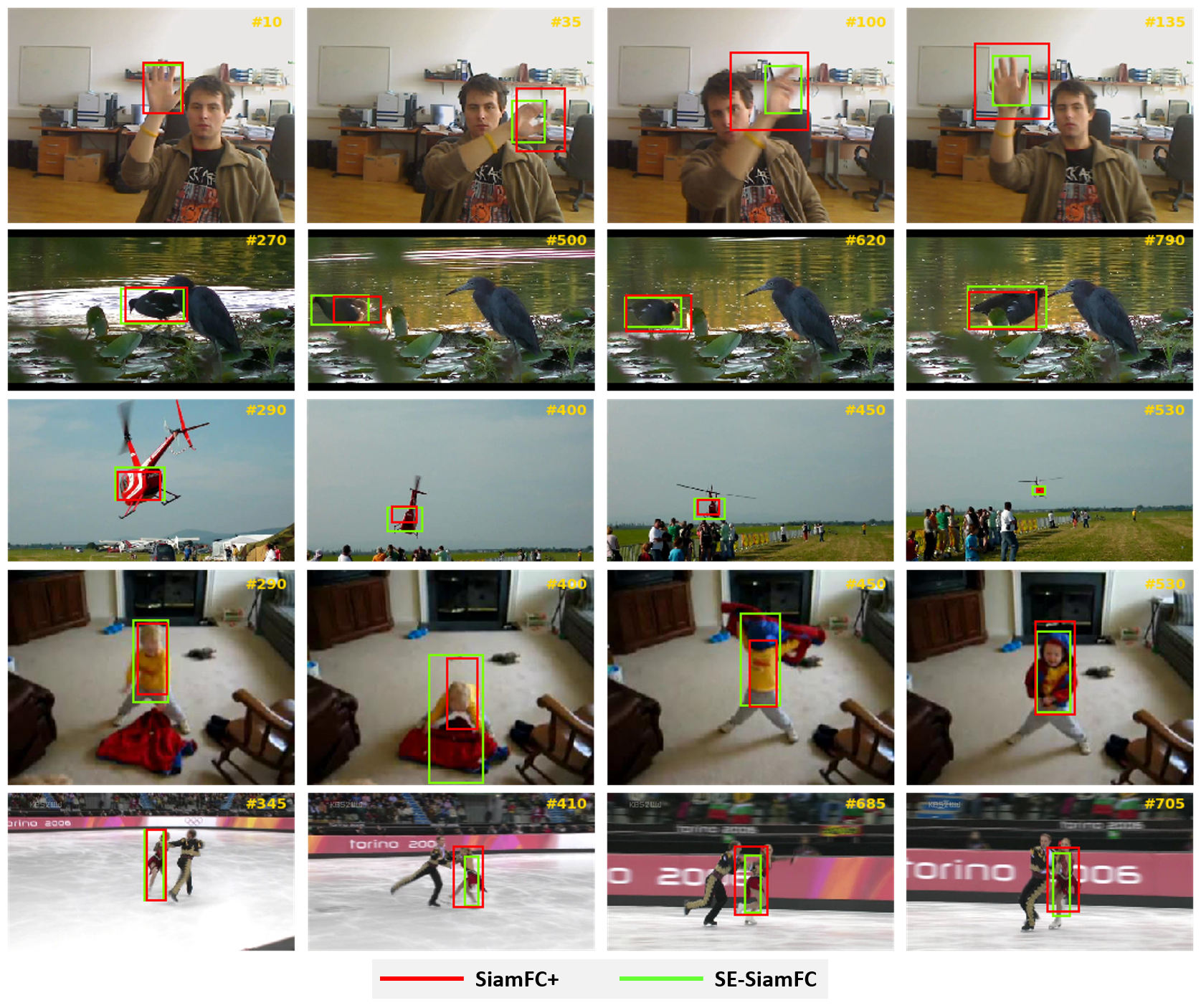}
    \end{center}
    \caption{Qualitative comparison of SE-SiamFC with SiamFC+ on VOT2016/2017 sequences.}
    \label{fig:qual_supp}
\end{figure*}

\end{document}